\relax
%File: formatting-instruction.tex
\documentclass[letterpaper]{article} % DO NOT CHANGE THIS
\usepackage{aaai20}  % DO NOT CHANGE THIS
\usepackage{times}  % DO NOT CHANGE THIS
\usepackage{helvet} % DO NOT CHANGE THIS
\usepackage{courier}  % DO NOT CHANGE THIS
\usepackage[hyphens]{url}  % DO NOT CHANGE THIS
\usepackage{graphicx} % DO NOT CHANGE THIS
\urlstyle{rm} % DO NOT CHANGE THIS
  % DO NOT CHANGE THIS
\usepackage{graphicx}  % DO NOT CHANGE THIS
\frenchspacing  % DO NOT CHANGE THIS
\setlength{\pdfpagewidth}{8.5in}  % DO NOT CHANGE THIS
\setlength{\pdfpageheight}{11in}  % DO NOT CHANGE THIS
%\nocopyright

\usepackage{amsmath,amssymb,amsfonts}
\usepackage{mathtools}  % for floor
\usepackage{subcaption}
\usepackage{makecell, multirow}
\usepackage{color}

%PDF Info Is REQUIRED.
% For /Author, add all authors within the parentheses, separated by commas. No accents or commands.
% For /Title, add Title in Mixed Case. No accents or commands. Retain the parentheses.

 \pdfinfo{
/Title (Plug-in, Trainable Gate for Streamlining Arbitrary Neural Network)
/Author (Jaedeok Kim, Chiyoun Park, Hyun-Joo Jung, and Yoonsuck Choe)
} %Leave this	

\newcommand{\bb}{{\bf b}}
\newcommand{\bw}{{\bf w}}

\newcommand{\BBR}{\mathbb{R}}

\newcommand{\CALL}{\mathcal{L}}

\newcommand{\TG}{\mathrm{TG}}

\newcommand{\norm}[1]{\left\Vert#1\right\Vert}
\newcommand{\ind}[1]{{\bf 1}_{#1}}

\DeclarePairedDelimiter\floor{\lfloor}{\rfloor}

% Math Environment
\newtheorem{definition}{Definition}
\newtheorem{proposition}{Proposition}
\newenvironment{proof}[1][Proof]{\par\noindent{\bf #1.~}}
{\hspace*{\fill}\rule{5pt}{5pt}\par\vspace{0.2em}}

\setcounter{secnumdepth}{0} %May be changed to 1 or 2 if section numbers are desired.

% The file aaai20.sty is the style file for AAAI Press
% proceedings, working notes, and technical reports.
%
\setlength\titlebox{2.5in} % If your paper contains an overfull \vbox too high warning at the beginning of the document, use this
% command to correct it. You may not alter the value below 2.5 in
\title{
Plug-in, Trainable Gate for Streamlining Arbitrary Neural Networks
}
%Your title must be in mixed case, not sentence case.
% That means all verbs (including short verbs like be, is, using,and go),
% nouns, adverbs, adjectives should be capitalized, including both words in hyphenated terms, while
% articles, conjunctions, and prepositions are lower case unless they
% directly follow a colon or long dash
\author{
Jaedeok Kim$^1$\thanks{
Equal contribution.
}
Chiyoun Park$^1$\footnotemark[1]
Hyun-Joo Jung$^1$\footnotemark[1]
Yoonsuck Choe$^{1,2}$\\
$^1$Artificial Intelligence Center, Samsung Research, Samsung Electronics Co.\\
56 Seongchon-gil, Secho-gu,
Seoul, Korea, 06765 \\
$^{2}$Department of Computer Science and Engineering, Texas A\&M University\\
College Station, TX, 77843, USA \\
\{jd05.kim, chiyoun.park, hj34.jung\}@samsung.com, choe@tamu.edu
}

\begin{document}

\maketitle

\begin{abstract}
Architecture optimization, which is a technique for finding an efficient neural network that meets certain requirements, generally reduces to a set of multiple-choice selection problems among alternative sub-structures or parameters.
The discrete nature of the selection problem, however, makes this optimization difficult.
To tackle this problem we introduce a novel concept of a trainable gate function.
The trainable gate function, which confers a differentiable property to discrete-valued variables, allows us to directly optimize loss functions that include non-differentiable discrete values such as 0-1 selection.
The proposed trainable gate can be applied to pruning.
Pruning can be carried out simply by appending the proposed trainable gate functions to each intermediate output tensor followed by fine-tuning the overall model, using any gradient-based training methods.
So the proposed method can jointly optimize the selection of the pruned channels while fine-tuning the weights of the pruned model at the same time.
% Our proposed method pruned VGG-16 on ImageNet dataset by 90\% without fine-tuning with no accuracy drop.
Our experimental results demonstrate that the proposed method efficiently optimizes arbitrary neural networks in various tasks such as image classification, style transfer, optical flow estimation, and neural machine translation.
\end{abstract}

\section{Introduction} \label{sec:introduction}

Deep neural networks have been widely used in many applications such as image classification, image generation, and machine translation.
However, in order to increase accuracy of the models, the neural networks have to be made larger and require a huge amount of computation
\cite{he2016deep,simonyan2014very}.
Because it is often not feasible to load and execute such a large model on an on-device platform such as mobile phones or IoT devices,
various architecture optimization methods have been proposed for finding an efficient neural network that meets certain design requirements.
In particular, pruning methods can reduce both model size and computational costs effectively, but the discrete nature of the binary selection problems makes such methods difficult and inefficient \cite{he2018amc,luo2018autopruner}.

Gradient descent methods can solve a continuous optimization problem efficiently by minimizing the loss function,
but such methods are not directly applicable to discrete optimization problems because they are not differentiable.
While many alternative solutions such as simulated annealing \cite{kirkpatrick1983annealing} have been proposed to handle discrete optimization problems,
they are too cost-inefficient in deep learning because we need to train alternative choices to evaluate the sample's accuracy.

In this paper, we introduce a novel concept of a trainable gate function (TGF) that confers a differentiable property to discrete-valued variables.
It allows us to directly optimize, through gradient descent, loss functions that include discrete choices that are non-differentiable.
By applying TGFs each of which connects a continuous latent parameter to a discrete choice,
a discrete optimization problem can be relaxed to a continuous optimization problem.

Pruning a neural network is a problem that decides which channels (or weights) are to be retained.
In order to obtain an optimal pruning result for an individual model, one needs to compare the performance of the model induced by all combinations of retained channels.
While specialized structures or searching algorithms have been proposed for pruning, they have complex structures or their internal parameters need to be set manually \cite{he2018amc}.
The key problem of channel pruning is that there are discrete choices in the combination of channels, which makes the problem of channel selections non-differentiable.
Using the proposed TGF allows us to reformulate discrete choices as a simple differentiable learning problem,
so that a general gradient descent procedure can be applied, end-to-end.

Our main contributions in this paper are in three fold.
\begin{itemize}
    \item
    We introduce the concept of a TGF which makes a discrete selection problem solvable by a conventional gradient-based learning procedure.
    \item
    We propose a pruning method with which a neural network can be directly optimized in terms of the number of parameters or that of FLOPs.
    The proposed method can prune and train a neural network simultaneously, so that the further fine-tuning step is not needed.
    \item
    Our proposed method is task-agnostic so that it can be easily applied to many different tasks.
\end{itemize}

Simply appending TGFs, we have achieved competitive results in compressing neural networks with minimal degradation in accuracy.
For instance, our proposed method compresses ResNet-56 \cite{he2016deep} on CIFAR-10 dataset \cite{cifar10} by half in terms of the number of FLOPs with negligible accuracy drop.
In a style transfer task, we achieved an extremely compressed network which is more than 35 times smaller and 3 times faster than the original network.
Moreover, our pruning method has been effectively applied to other practical tasks such as optical flow estimation and neural machine translation.

By connecting discrete and continuous domains through the concept of TGF,
we are able to obtain competitive results on various applications in a simple way.
Not just a continuous relaxation, it directly connects the deterministic decision to a continuous and differentiable domain.
By doing so, the proposed method in this paper could help us solve more practical applications that have difficulties due to discrete components in the architecture.

\section{Related Work} \label{sec:related}

Architecture optimization can be considered as a combinatorial optimization problem.
The most important factors are to determine which channels should be pruned within a layer in order to minimize the loss of acquired knowledge.

These can be addressed as a problem that finds the best combination of retained channels where it requires extremely heavy computation.
As an alternative, heuristic approaches have been proposed to select channels to be pruned \cite{he2017channel,li2016pruning}.
Although these approaches provide rich intuition about neural networks and can be easily adopted to compress a neural network quickly, such methods tend to be sub-optimal for a given task in practice.

The problem of finding the best combination can be formulated as a reinforcement learning (RL) problem and then be solved by learning a policy network.
Bello et al. \cite{bello2017neural} proposed a method to solve combinatorial optimization problems including traveling salesman and knapsack problems by training a policy network.
Zoph and Le \cite{zoph2016neural} proposed an RL based method to find the most suitable architecture.
The same approach can be applied to find the best set of compression ratio for each layer that satisfies the overall compression and performance targets, as proposed in \cite{he2018amc,zhong2018prune}.
However, RL based methods still require extremely heavy computation.

To tackle the scalability issue, a differentiable approach has been considered in various research based on continuous relaxation \cite{liu2019darts,liu2017learning,louizos2017bayesian}.
To relax a discrete problem to be differentiable, Liu et al. \cite{liu2019darts} proposed a method that places a mixture of candidate operations by using softmax.
Luo and Wu \cite{luo2018autopruner} proposed a type of a self-attention module with a scaled sigmoid function as an activation function to retain channels from probabilistic decision.
However, in these methods it is essential to carefully initialize and control the parameters of the attention layers and the scaled sigmoid function.
While a differentiable approach is scalable to a large search space,
existing approaches determine the set of selected channels in a probabilistic way so that they require an additional step to decide whether to prune each channel or not.

The method we propose here allows us to find the set of channels deterministically by directly optimizing the objective function which confers a differentiable property to discrete-valued variables,
thus bypassing the additional step required in probabilistic approaches.
The proposed optimization can be performed simply by appending TGFs to a target layer
and train it using a gradient descent optimization.
The proposed method does not depend on additional parameters, so that it does not need a careful initialization or specialized annealing process for stabilization.

\section{Differentially Trainable Gate Function} \label{sec:trainable-gate}

\begin{figure}[t]
    \centerline{\includegraphics[width=1\columnwidth]{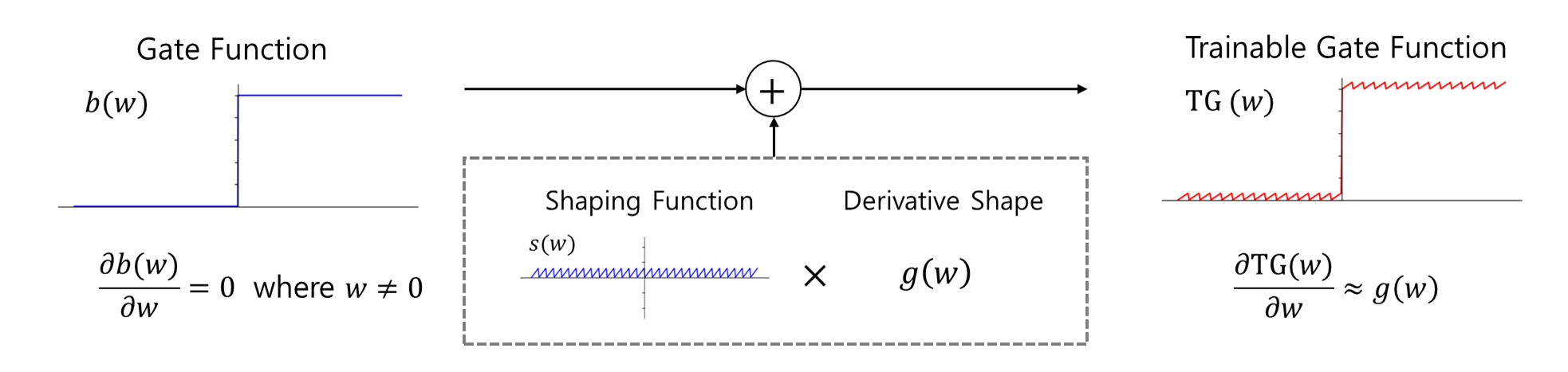}}
    \caption{
        Explanation of how to shape a gate function to be trainable.
        The original gate function has zero derivative as in the left.
        We add to the original gate function the shaping function $s(w)$ multiplied by a desirable derivative shape $g(w)$
        which changes the derivative of the gate function.
        The resulting function is a TGF that has a desirable derivative.
    }
    \label{fig:how-to-shape}
\end{figure}

Consider combinatorial optimization in a selection problem.
\begin{align} \label{eqn:comb-opt}
    \min_{\theta\in\Theta,\bb\in \{0, 1\}^n} \ \CALL({\bf \theta}, \bb)
\end{align}
where $\bb=(b_1, \cdots, b_n)$ is a vector of binary selections.
$\CALL\colon\Theta \times \{0,1\}^n\to\BBR$ is an objective function parameterized by $\theta$.
The optimization problem (\ref{eqn:comb-opt}) is a generalized form of a selection problem that can cover a parameterized loss function such as a neural network pruning problem.
In case of a pure selection problem we set the domain $\Theta$ to be a singleton set.

To make the problem differentiable, we consider $b_i$ as an output of a binary gate function $b \colon \BBR \to \{0, 1\}$ parameterized by an auxiliary variable $w_i$.
We will let $b$ be a step function for convenience.
\footnote{
Although we only consider a step function as a gate function $b$, the same argument can be easily applied to an almost everywhere differentiable binary gate function.
}
Then the optimization problem (\ref{eqn:comb-opt}) is equivalent to the following.
\begin{align} \label{eqn:comb-opt-relaxed}
    \min_{\theta\in\Theta,\bw\in\BBR^n} \ \CALL({\bf \theta}, b(w_1),\cdots,b(w_n))
\end{align}
where the problem is defined in continuous domain.
That is, if $({\bf \theta}^*, \bw^*)$ is a global minimum of (\ref{eqn:comb-opt-relaxed}),
then $({\bf \theta}^*, b(\bw^*))$ is a global minimum of (\ref{eqn:comb-opt}).

While the continuous relaxation (\ref{eqn:comb-opt-relaxed}) enables the optimization problem (\ref{eqn:comb-opt}) to be solved by gradient descent,
a gate function $b_i(\cdot)$ has derivative of zero wherever differentiable and consequently
\begin{align}
    \label{eqn:derivative}
    \frac{\partial \CALL}{\partial w_i}
    = \frac{\partial \CALL}{\partial b}
      \frac{\partial b}{\partial w_i}
    = 0,
    \textrm{ where $w_i \neq 0$}.
\end{align}
So, a gradient descent optimization does not work for such a function.

In order to resolve this issue, we consider a new type of gate function which has non-zero gradient and is differentiable almost everywhere.
Motivated by \cite{hahn2018gradient}, we first define a gradient shaping function $s\colon\BBR\to\BBR$ by
\begin{align}
	\label{eqn:shape-func}
	s^{(M)}(w) := \frac{M w - \floor{M w}}{M}
\end{align}
where $M$ is a large positive integer and $\floor{w}$ is the greatest integer less than or equal to $w$.
Note that this function has near-zero value for all $w$, and its derivative is always one wherever differentiable.
Using (\ref{eqn:shape-func}) we consider a trainable gate defined as the following (see Figure \ref{fig:how-to-shape}).

\begin{definition}
  A function $\TG^{(M)}\colon \BBR \to \BBR$ is said to be a trainable gate of a gate function $b\colon \BBR \to \{0, 1\}$ with respect to a gradient shape  $g\colon\BBR\to\BBR$ if
  \begin{align}
      \label{eqn:shaped-gate}
      \TG^{(M)}(w;g) := b(w) + s^{(M)}(w) g(w).
  \end{align}
\end{definition}
Then a trainable gate $\TG^{(M)}$ satisfies the following proposition.

\begin{proposition}
	\label{prop:shaped-gate}
	For any bounded derivative shape $g$ whose derivative is also bounded, $\TG^{(M)}(w;g)$ uniformly converges to $b(w)$ as $M\to\infty$.
    Moreover, $\TG^{(M)'}(w;g)$ uniformly converges to $g(w)$.
\end{proposition}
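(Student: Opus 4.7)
The plan is to prove both convergence claims by direct computation, exploiting the explicit bound $|s^{(M)}(w)| \le 1/M$ that comes from the definition of the sawtooth shaping function.

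First I would verify the uniform bound on the shaping function itself. By definition, $Mw - \floor{Mw} \in [0,1)$ for every $w \in \BBR$, hence $|s^{(M)}(w)| \le 1/M$ for all $w$. This is the key quantitative fact behind the whole proposition, and it immediately gives the first claim: since $\TG^{(M)}(w;g) - b(w) = s^{(M)}(w)\,g(w)$, boundedness of $g$ yields
\begin{align*}
    \sup_{w\in\BBR} \bigl| \TG^{(M)}(w;g) - b(w) \bigr|
    \le \frac{\|g\|_\infty}{M} \longrightarrow 0.
\end{align*}

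Next I would handle the derivative claim. Away from the jump point of $b$ and away from the countable set $\{k/M : k\in\mathbb{Z}\}$ where the floor function jumps, every ingredient is differentiable and the product rule applies. Since $b$ is a step function, $b'(w)=0$ wherever it is differentiable, and a direct differentiation of $s^{(M)}$ gives $s^{(M)\prime}(w)=1$ on that same set. Therefore
\begin{align*}
    \TG^{(M)\prime}(w;g) - g(w)
    = \bigl(s^{(M)\prime}(w) - 1\bigr)\,g(w) + s^{(M)}(w)\,g'(w)
    = s^{(M)}(w)\,g'(w),
\end{align*}
and invoking the bound on $s^{(M)}$ together with boundedness of $g'$ yields $\sup_w |\TG^{(M)\prime}(w;g)-g(w)| \le \|g'\|_\infty / M \to 0$.

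There is not really a hard step here; the proposition is essentially a computational consequence of the construction. The only point that requires a modicum of care is the domain on which the derivative identities hold: both $b$ and $s^{(M)}$ fail to be differentiable on a measure-zero set, so the convergence statement for $\TG^{(M)\prime}$ should be read as uniform convergence on the common set of differentiability (equivalently, away from $\{0\} \cup \{k/M : k\in\mathbb{Z}\}$). With that reading, the two displayed bounds above constitute the full argument, and I would wrap up by noting that both rates are $O(1/M)$, so the convergence is not just uniform but quantitative.
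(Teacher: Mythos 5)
Your proposal is correct and follows essentially the same route as the paper's own proof: bound $|s^{(M)}(w)|$ by $1/M$ for the first claim, and use $s^{(M)\prime}(w)=1$ (together with $b'(w)=0$) on the set of differentiability to reduce the derivative error to $|s^{(M)}(w)g'(w)| \le \|g'\|_\infty/M$ for the second. Your explicit remark that the derivative convergence only holds away from the non-differentiability set $\{0\}\cup\{k/M : k\in\mathbb{Z}\}$ is a point the paper glosses over, but otherwise the arguments coincide.
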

\begin{proof}
	By definition (\ref{eqn:shaped-gate}), it satisfies that for all $w\in\BBR$
    \begin{align*}
        |\TG^{(M)}(w;g) - b(w) |
         & = | s^{(M)}(w) g(w) | \\
         & \leq \frac{1}{M} | g(w) | \to 0,
    \end{align*}
    as $M\to\infty$.

	Also, $s'(w) = 1$ if $s(w)$ is differentiable at $w$,
    which yields for all $w\in\BBR$
    \begin{align*}
        & | \TG^{(M)}(w;g)'(w) - g(w) |  \\
        &  =   | s^{(M)'}(w) g(w) + s^{(M)}(w) g'(w) - g(w) | \\
        &  =   | s^{(M)}(w) g'(w) | \\
        & \leq \frac1M |g'(w)| \to 0,
    \end{align*}
    as $M\to\infty$.
\end{proof}

Proposition \ref{prop:shaped-gate} guarantees that the trainable gate $\TG^{(M)}(w;g)$ can approximate the original gate function $b(w)$, while its derivative still approximates the desired derivative shape $g$.
It is now possible to control the derivative of the given kernel $b(w)$ as we want and hence a gradient descent optimization is applicable to the TGF $\TG^{(M)}(w;g)$.
For convenience, we will drop the superscript and the desired function $g$ from $\TG^{(M)}(w;g)$ unless there is an ambiguity.

\subsection{Difference between Probabilistic and Deterministic Decisions}
\label{sec:syn-example}

The proposed TGF directly learns a deterministic decision during the training phase unlike existing methods.
While a probabilistic decision has been considered in existing differentiable methods \cite{jang2017categorical,liu2019darts,liu2017learning,louizos2017bayesian},
probabilistic decisions are not clear to select and hence it needs further decision steps.
Due to the on-off nature of our TGF's decision, we can include more decisive objectives, such as the number of channels, FLOPs or parameters, without requiring approximation to expectation of the distribution or smoothing to non-discrete values.

\begin{figure}[t]
    \centerline{\includegraphics[width=0.9\columnwidth]{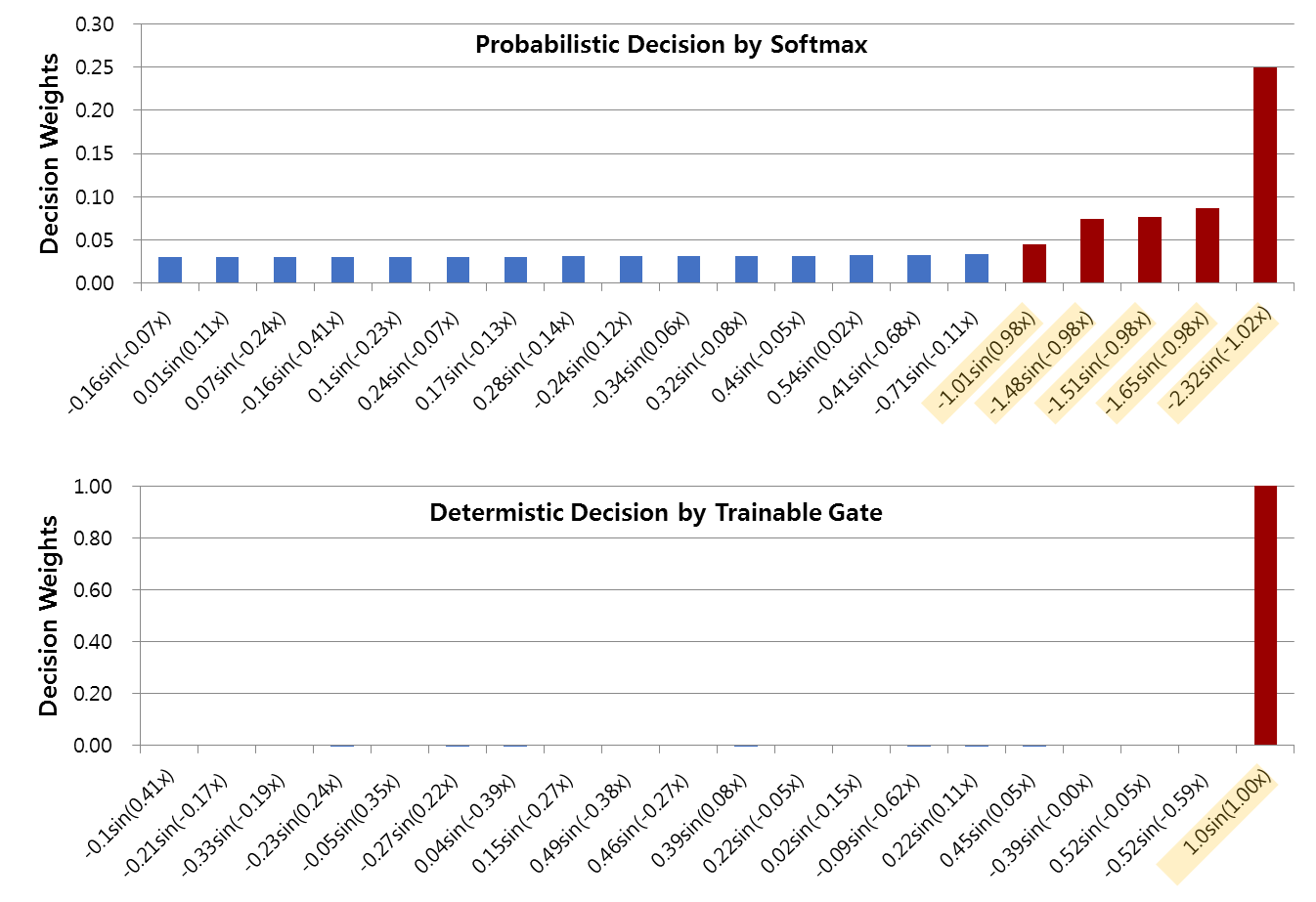}}
    \caption{
        Difference between probabilistic and deterministic decisions.
        Each bar in the above graph indicates the decision weight of a hidden node learnt by a probabilistic method.
        The $i$th label on the $x$-axis represents the learnt weights $w_i$ and $c_i$ of the $i$th hidden node.
        The graph below shows the results from a TGF where there is no redundant hidden node.
    }
    \label{fig:sine-example}
\end{figure}

We performed a synthetic experiment in order to see the difference between probabilistic and deterministic approaches.
To this end, we generate a training dataset to learn the sine function $\sin(\cdot)$.
Consider a neural network having a single fully-connected layer having 20 hidden nodes each of which adopts the sine function as its activation.
Since a training sample in our synthetic dataset is of the form $(x, \sin(x))$,
it is enough to use only one hidden node to express the relation between input $x$ and output $y$.

We consider a selection layer consisting of a continuous relaxation function with 20 hidden nodes each of which learns whether to retain the corresponding hidden node or not.
Two different types of a relaxation function are addressed: a softmax function for a probabilistic decision \cite{liu2019darts} and the proposed TGFs for a deterministic decision.

As we can see in Figure \ref{fig:sine-example},
the probabilistic method found a solution that uses more than one node.
In particular, the top 5 hidden nodes based on the decision weight have similar weight values $w_i$ and $c_i$.
In a training phase the probabilistic decision uses a linear combination of options so that error can be canceled out and as a result the selections will be redundant.
On the other hand, the deterministic decision (our proposed TGF) selects only one node exactly.
Due to the on-off nature of the deterministic decision,
the TGF learns by incorporating the knowledge of selections.
So the deterministic decision can choose effectively without redundancy.

While we have considered the binary selection problem so far,
it is straightforward to extend the concept of a trainable gate to an $n$-ary case by using $n$-simplex.
However, in order to show the practical usefulness of the proposed concept,
we will consider the pruning problem, an important application of a TGF, in which it is enough to use a binary gate function.

\section{Differentiable Pruning Method} \label{sec:diff-pruning}

In this section we develop a method to efficiently and automatically prune a neural network as an important application of the proposed TGF.
To this end, using the concept of a TGF we propose a trainable gate layer (TGL) which learns how to prune channels from its previous layer.

\subsection{Design of a Trainable Gate Layer}

\begin{figure}[t]
    \centerline{\includegraphics[width=1.0\columnwidth]{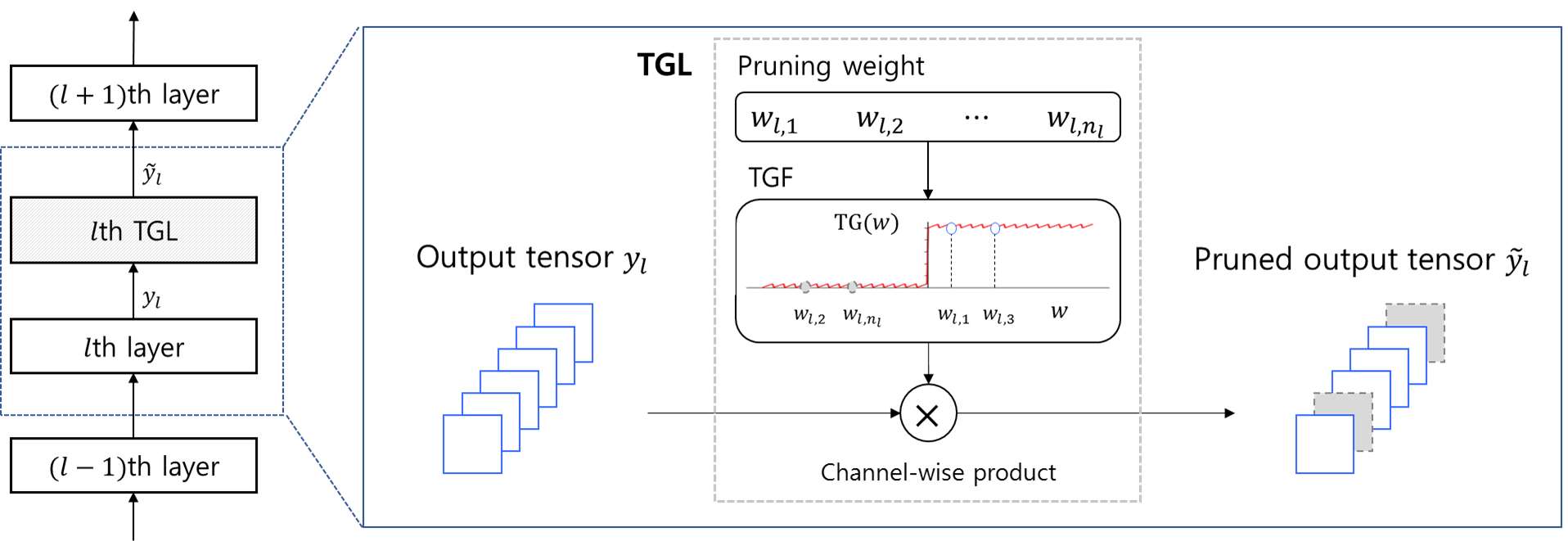}}
    \caption{
        The overview of the proposed TGL.
        The figure shows an example of a TGL appending to a convolution layer.
        The TGL consists of a set of gate functions each of which determines whether to prune or not.
        Each gate function outputs a value of 0 or 1 which is multiplied to the corresponding filter of the convolution kernel of the target layer.
    }
    \label{fig:framework}
\end{figure}

The overall framework of our proposed TGL is illustrated in Figure \ref{fig:framework}.
Channel pruning can be formulated by a function that zeros out certain channels of an output tensor in a convolutional neural network and keeps the rest of the values.
We thus design a TGL as a set of TGFs whose elements correspond to the output channels of a target layer.
Let the $l$th target layer map an input tensor $x_l$ to an output tensor $y_l:=(y_{l,1},\cdots,y_{l,n_l})$ that has $n_l$ channels using a kernel $K_l$
\begin{align*}
	y_{l,i} = K_{l,i} * x_l.
\end{align*}
for $i=1,\cdots,n_l$.
A fully connected layer uses the multiplication $\times$ instead of a convolution operation $*$, but we here simply use $*$ to represent both cases.

Let a TGL $P_l$ prune the $l$th target layer where
the TGL $P_l$ consists of trainable weights $\bw_l:=(w_{l,1},\cdots,w_{l,n_l})$ and a function $\TG(\cdot)$.
The weight $w_{l,i}$, $i=1,\cdots,n_l$, is used to learn whether the corresponding channel should be masked by zero or not.
The TGL $P_l$ masks the output tensor $y_l$ of the $l$th target layer as
\begin{align}
    \label{eqn:pruned-tensor}
	\tilde{y}_{l,i} = \TG(w_{l,i}) y_{l,i},
\end{align}
where $\tilde{y}_l:=(\tilde{y}_{l,1},\cdots,\tilde{y}_{l,n_l})$ is the pruned output tensor by $P_l$.
Since we have $y_{l,i} = K_{l,i} * x_l$, (\ref{eqn:pruned-tensor}) can be rewritten as
\begin{align*}
    \TG(w_{l,i}) y_{l,i}
    & = \TG(w_{l,i}) (K_{l,i} * x_l) \\
    & = (\TG(w_{l,i}) K_{l,i}) * x_l.
\end{align*}
So multiplying $\TG(w_{l,i})$ to $y_{l,i}$ masks the $i$th channel from the kernel $K_l$.
While $\TG(w_{l,i})$ might not be exactly zero due to the gradient shaping, its effect can be negligible by letting the value of $M$ be large enough.

From (\ref{eqn:pruned-tensor}), $\TG(w_{l,i})=0$ yields $\tilde{y}_{l,i} = 0$.
So the value of the weight $w_{l,i}$ can control the $i$th channel of the output tensor $y_l$.
If a step function $b(w)=\ind{[w>0]}$ is used as the gate function in the TGL,
$w_{l,i}<0$ implies that the TGL zeros out the $i$th channel from the $l$th layer.
Otherwise, the channel remains identical.
Hence, by updating the weights $w_l$, we can make the TGL learn the best combination of channels to prune.

The proposed channel pruning method can be extended to weight pruning or layer pruning in a straightforward way.
It can be achieved by applying the trainable gate functions for each elements of the kernel or each layer.

\subsection{Compression Ratio Control}

The purpose of channel pruning is to reduce neural network size or computational cost.
However, simply adding the above TGL without any regularization does not ensure that channels are pruned out as much as we need.
Unless there is a significant amount of redundancy in the network, having more filters is often advantageous to obtain higher accuracy, so the layer will not be pruned.
Taking this issue into account,
we add a regularization factor to the loss function that controls the compression ratio of a neural network as desired.

Let $\rho$ be the target compression ratio.
In case of reducing the number of FLOPs, the target compression ratio $\rho$ is defined by the ratio between the number of the remaining FLOPs and the total number $C_{tot}$ of FLOPs of the neural network.
The weight values $\bw:=(\bw_1,\cdots,\bw_L)$ of TGLs determine the remaining number of FLOPs, denoted by $C(\bw)$.
We want to reduce the FLOPs of the pruned model by the factor of $\rho$, that is, we want it to satisfy $C(\bw) / C_{tot} \approx \rho$.

Let $\CALL({\bf \theta},\bw)$ be the original loss function to be minimized where $\bf \theta$ denotes the weights of layers in the neural network except the TGLs.
We add a regularization term to the loss function as follows in order to control the compression ratio.
\begin{align}
    \label{eqn:loss-with-reg}
	\CALL({\bf \theta}, \bw; \rho) :=
    	\CALL({\bf \theta}, \bw)
        + \lambda \norm{\rho - C(\bw) / C_{tot}}_2^2
\end{align}
where $\norm{\cdot}_2$ denotes the $l_2$-norm and $\lambda$ is a regularization parameter.
The added regularization will ensure that the number of channels will be reduced to meet our desired compression ratio.

Note that minimization of the loss function (\ref{eqn:loss-with-reg}) does not only update the weights of TGLs but also those of the normal layers.
A training procedure, therefore, jointly optimizes the selection of the pruned channels while fine-tuning the weights of the pruned model at the same time.
In traditional approaches where channel pruning procedure is followed by a separate fine-tuning stage, the importance of each channel may change during fine-tuning stage, which leads to sub-optimal compression.
Our proposed method, however, does not fall into such a problem
since each TGL automatically takes into account the importance of each channel while adjusting the weights of the original model based on the pruned channels.
The loss function indicates that there is a trade-off between the compression ratio and accuracy.

While we have considered the number of FLOPs in this subsection,
we can extend easily to other objectives such as the number of weight parameters or channels by replacing the regularization target in (\ref{eqn:loss-with-reg}).

\section{Experimental Results} \label{sec:experiments}

In this section, we will demonstrate the effectiveness of the proposed TGF through various applications in the image and language domains.
We implemented our experiments using Keras \cite{chollet2015keras} unless mentioned otherwise.
In order to shape the derivative of the gate function $b$, a constant derivative shape $g(w)=1$ and $M=10^5$ is used and a simple random weight initialization are used.
In all experiments, only convolution or fully-connected layers are considered in calculating the number of FLOPs of a model since the other type of layers, e.g., batch normalization, requires relatively negligible amount of computation.

\subsection{Image Classification}

We used CIFAR-10 and ImageNet datasets for our image classification experiments.
We used pretrained weights of ResNet-56 on CIFAR-10 that trained from scratch with usual data augmentations (normalization and random cropping).
For VGG-16 on ImageNet we used pre-trained weights that was published in Keras.
Although we found that the accuracy of each model in our experimental setup differs slightly from the reported value,
the original pre-trained weights were used in our experiments without modification since we wanted to investigate the performance of our proposed method in terms of the accuracy drop.

\begin{figure}[t]
    \begin{subfigure}[b]{0.21\textwidth}
        \centering
        \includegraphics[width=0.97\linewidth]{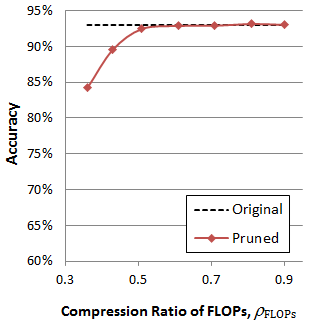}
        \caption{ResNet-56 (CIFAR-10)}
        \label{fig:noft-resnet}
    \end{subfigure}%
    \begin{subfigure}[b]{0.25\textwidth}
        \centering
        \includegraphics[width=0.91\linewidth]{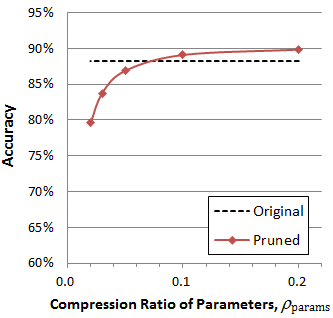}
        \caption{VGG-16 (ImageNet)}
        \label{fig:noft-vgg}
    \end{subfigure}%
    \caption{
        Pruning results without fine-tuning.
        (a) Channel pruning of ResNet-56
        % in terms of FLOPs % (#total filters=5648)
        and
        (b) Weight pruning of VGG-16
        % in terms of the number of parameters.  % (#total params=15M)
        The $x$-axis represents the compression ratio which is the ratio of the remaining of a pruned model over the original number of FLOPs or parameters.
    }
    \label{fig:noft}
\end{figure}

\textbf{Pruning without Fine-tuning}
In order to show the effect of the TGFs, we first considered a selection problem.
We kept the pre-trained weights of a model and only pruned the channels or weights without fine-tuning by appending TGLs to convolution and fully-connected layers.
That is, we fix the weights $\theta$ and optimize $\bw$ in (\ref{eqn:comb-opt-relaxed}).
Figure \ref{fig:noft-resnet} shows the results of channel pruning in ResNet-56 \cite{he2016identity} on CIFAR-10.
It can be observed that the number of FLOPs can be reduced by half without noticeable change in accuracy even when we do not apply fine-tuning, which implies that the TGFs works as expected.

We also applied weight pruning mentioned in the previous section to VGG-16 on ImageNet (Figure \ref{fig:noft-vgg}).
Even though there are more than $1.38\times10^8$ weight parameters in the model whether to be retained or not,
simply plugging-in TGLs to the model allows us to find a selection of redundant weight parameters.
Note that the accuracy of VGG-16 even increases to 89.1\% from 88.24\% when only 10\% of the parameters are retained.
This phenomenon is due to the fact that reducing the number of non-gated parameters are equivalent to applying $L_0$ regularization to the neural network, and so adding the gating function to each parameter improves the generalization property of the model further.

\begin{table}[t]
  \caption{
      Channel pruning of ResNet-56 on the CIFAR-10 and weight pruning of VGG-16 on ImageNet.
      $\textrm{acc}$ and $\Delta_{\textrm{acc}}$ denote the accuracy after pruning and the accuracy drop, respectively.
      $\rho_{\textrm{FLOPs}}$ ($\rho_{\textrm{params}}$, resp.) means the number of remaining FLOPs (parameters, resp.) over the total number of FLOPs (parameters, resp.) in each model,
      where a small value means more pruned.
      Methods:
      FP \cite{li2016pruning}, CP \cite{he2017channel},
      VCP \cite{zhao2019variational},
      AMC \cite{he2018amc},
      ADMM and PWP \cite{ye2018admm}.
  }
  \label{tbl:cifar10}
  \centering
  \begin{tabular}{c|c|c|c}
    \hline
    Model & Method & $\Delta_{\textrm{acc}}$ ($\textrm{acc}$) (\%) &  $\rho$ \\
    \hline \hline
    \multirow{6}*{
        \makecell{
            ResNet-56 \\
            ($\rho=\rho_{\textrm{FLOPs}}$)
        }
    }
      & FP & 0.02 (93.06) & 0.72 \\  % 93.04 -> 93.06
      & VCP & -0.8 (92.26) &  0.79 \\  % 93.04 -> 92.26
      & \textbf{Proposed} & \textbf{0.04} (92.7) & 0.71 \\  % 92.66 -> 92.70
      \cline{2-4}
      & CP & -1.0 (91.8) &  0.5 \\
      & AMC & -0.9 (91.9) &  0.5 \\  % 92.8 -> 90.2 (no-ft) -> 91.9 (ft) (v2 ?) AMC
      & \textbf{Proposed} & \textbf{-0.3} (92.36)  & 0.51  \\ % 93.0 -> 93.22 (v1)
    \hline
    \multirow{4}*{
        \makecell{
            VGG-16 \\
            ($\rho=\rho_{\textrm{params}}$)
        }
    }
      & ADMM & 0.0 (88.7) & 1/19 \\ % 88.7 -> 88.7  ADMM  from https://arxiv.org/pdf/1810.07378.pdf
      & \textbf{Proposed} & \textbf{0.76} (89.0) & 1/20 \\  % 88.24 -> 89.00 (0.76)
      \cline{2-4}
      & PWP & -0.5 (88.2) & 1/34 \\ % 88.7 -> 88.2 (-0.5) PWP  from https://arxiv.org/pdf/1810.07378.pdf
      & \textbf{Proposed} & \textbf{-0.06} (88.18) & 1/50 \\  % 88.24 -> 88.18 (-0.06)
    \hline
  \end{tabular}
\end{table}

\textbf{Pruning with Fine-tuning}
In the next example, we jointly optimized the weights and selection at the same time
in order to incorporate fine-tuning to the selections.
Like the previous example we appended TGLs to a model, but we jointly trained both the TGLs and the weight parameters of the model.
Table \ref{tbl:cifar10} summarizes the pruning results.
As shown in the table, our results are competitive with existing methods.
For example, in ResNet-56, the number of FLOPs is reduced by half while maintaining the original accuracy.
It is also noticeable that we achieve higher accuracy on the compressed VGG-16 model, even if the accuracy of our initial model was worse.

\begin{figure}[t]
    \centerline{\includegraphics[width=0.8\columnwidth]{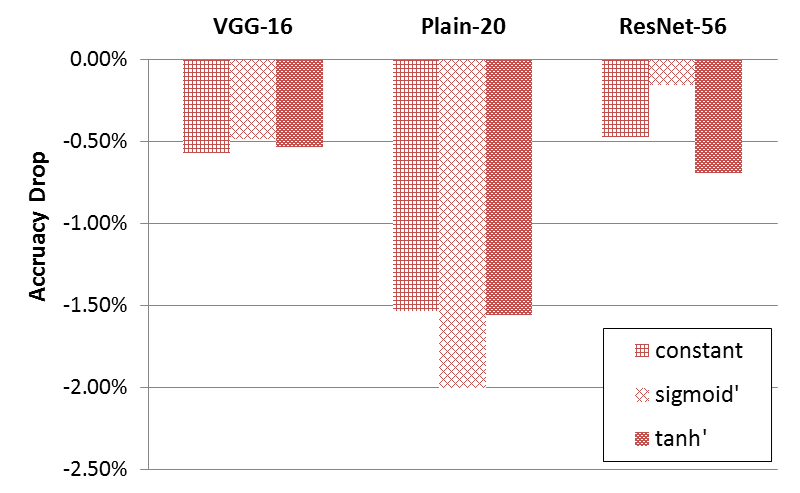}}
    \caption{
        Comparing the effect of gradient shaping in TGFs.
        {\it sigmoid'} denotes the derivative of the sigmoid function $\sigma(w)=1/(1+e^{-x})$ and {\it tanh'} denotes the derivative of the hyperbolic tangent function.
    }
    \label{fig:compare-grad-shape}
\end{figure}

While in our experiments we used a constant function to shape the derivative within the TGFs,
The proposed method can adopt any derivative shape by changing $g(w)$ in (\ref{eqn:shaped-gate}).
Figure \ref{fig:compare-grad-shape} compares the effect of different shaping functions.
It shows that the derivative shape $g$ does not affect the results critically,
which implies that our proposed method is stable over a choice of $g(w)$.
It can be concluded that a simple constant derivative shape $g(w)=1$ can be adopted without significant loss of accuracy.

\subsection{Image Generation}

We further applied our proposed pruning method to style transfer and optical flow estimation models which are the most popular applications in image generation.

\textbf{Style Transfer}
Style transfer \cite{dumoulin2016learned,gatys2015neural} generates a new image by synthesizing contents in given content image with styles in given style image.
Because a style transfer network is heavily dependent on the selection of which styles are used, it is not easy to obtain proper pre-trained weights.
So, we started from $N$-style transfer network  \cite{dumoulin2016learned} as an original architecture \footnote{
  Source code: \url{https://github.com/tensorflow/magenta/tree/master/magenta/models/image_stylization}.
} with \textit{randomly initialized weights}.
It is of course possible to start from pre-trained weights if available.

\begin{table}[t]
  \caption{
    A summary of style transfer model compression results.
    We measured the size of model files which are saved in binary format (.pb).
    The inference time was averaged over 10 frames on the CPU of a Galaxy S10.
    The frame size is 256$\times$256 pixels.
  }
  \label{tbl:style}
  \centering
  \begin{tabular}{c|c|c|c|c}
    \hline
    Model & \makecell{File size\\(MB)} & \makecell{FLOPs\\(G)} & \makecell{Params\\(M)} & \makecell{Time\\(ms)} \\
    \hline \hline
    Original & 6.9 & 10.079 & 1.674  & 549 \\ \hline
    $\rho_{ch} = 0.4$ & 1.8 & 2.470 & 0.405 & 414 \\ \hline
    $\rho_{ch} = 0.1$ & 0.2 & 0.227 & 0.020 & 160 \\
    \hline
  \end{tabular}
\end{table}

\begin{figure}[t]
    \centerline{\includegraphics[width=1\columnwidth]{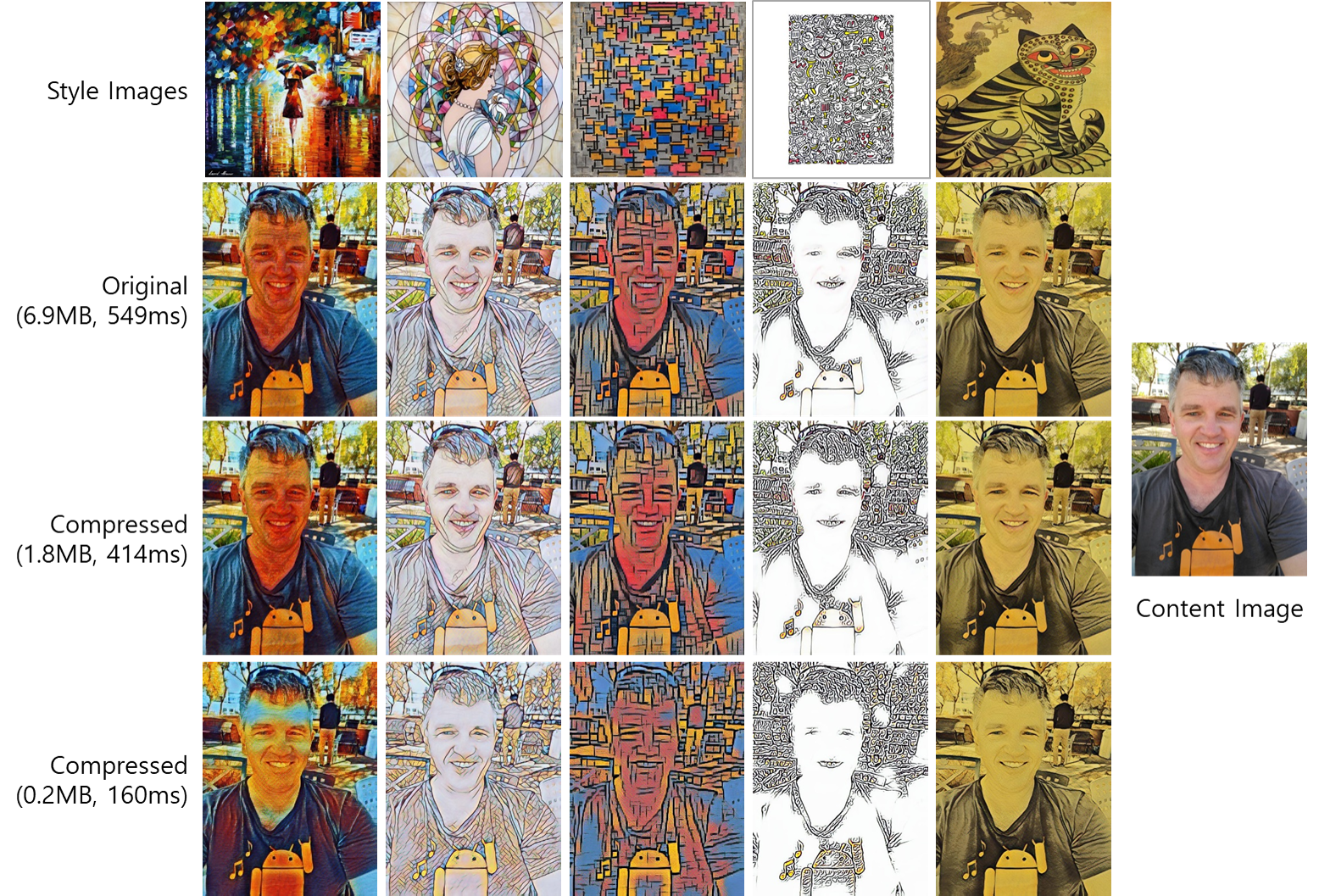}}
    \caption{
    Results of the style transfer task.
    The rightmost column represents a content image and the top row represents 5 style images.
    \textit{Second row}: Transferred images produced by the original model.
    \textit{Third row}: Transferred images produced by the compressed model ($\rho_{ch} = 0.4$).
    \textit{Fourth row}: Transferred images produced by the compressed model ($\rho_{ch} = 0.1$).}
    \label{fig:style_qual}
\end{figure}

To select which channels are retained, a TGL is plugged into the output of each convolution layer in the original architecture except the last one for preserving its generation performance.
In training phase, we used ImageNet as a set of content images and chose 5 style images (i.e., $N=5$) manually as shown in Figure \ref{fig:style_qual}.
We trained both original and compressed model from scratch for 20K iterations with a batch size of 16.
The number of pruned channels is used as a regularization factor with regularization weight $\lambda = 0.1$.

The compressed model ($\rho_{ch} = 0.1$) is $\times$34.5 times smaller than the original model in terms of file size (Table \ref{tbl:style}).
In order to see the actual inference time, we measured the inference time on the CPU of a Galaxy S10.
The compressed model ($\rho_{ch} = 0.1$) is more than $\times$3 times faster in terms of the inference time as shown although the generation quality preserved as shown in Figure \ref{fig:style_qual}.

\begin{figure}[t]
    \centerline{\includegraphics[width=\columnwidth]{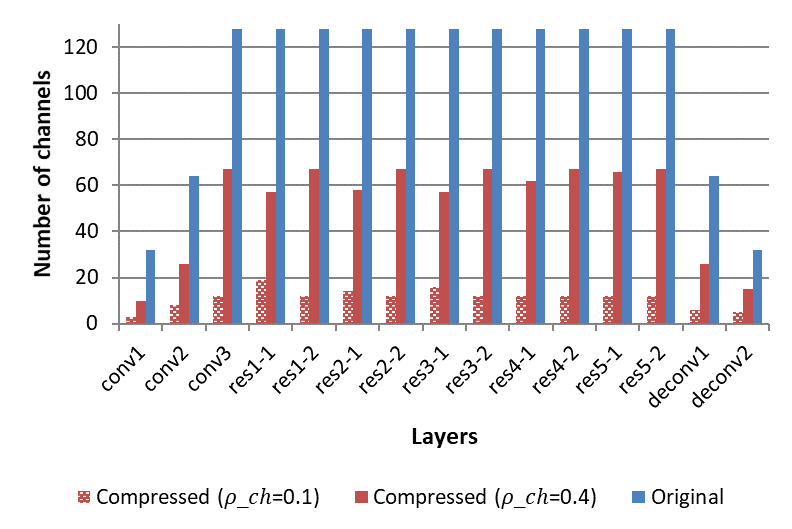}}
    \caption{
        The number of retained channels in compressed models.
        Layers that should match the dimension of outputs share a single TGL
        (conv3, res1-2, res2-2, res3-2, res4-2, and res5-2).
    }
    \label{fig:prune_num}
\end{figure}

Figure \ref{fig:prune_num} shows the number of retained channels in each layer.
The TGL does not select the number of pruned channels uniformly and it automatically selects which channels are pruned according to the objective function.
Without the further pruning steps, our proposed method can train and prune simultaneously the model with a consideration of the target compression ratio as we mentioned in the previous section.

\textbf{Optical Flow Estimation}
We next consider a task that learns the optical flow between images  \cite{dosovitskiy2015flownet,ilg2017flownet}.
In this experimentation, we used FlowNetSimple \cite{dosovitskiy2015flownet}, which is the same as FlowNetS\footnote{
    Source code: \url{https://github.com/sampepose/flownet2-tf}
} in \cite{ilg2017flownet}.
FlowNetS stacks two consecutive input images together and feed them into the network to extract motion information between these images.

\begin{table}[t]
  \caption{
      A summary of the optical flow estimation results.
      We measured the size of model files which are saved in binary format (.pb).
      For inference time, we ran both model on Intel Core i7-7700@3.60GHz CPU.
      The size of image is 384x512 pixels.
      EPE was averaged over validation data in Flying Chairs dataset.
  }
  \label{tbl:flownetS}
  \centering
  \begin{tabular}{c|c|c|c}
    \hline
    Model & File size (MB) & EPE & Time (ms) \\
    \hline \hline
    Original & 148 & 3.15 & 292.7\\ \hline
    $\rho_{ch} = 0.60$ & 54 & 2.91 & 177.8 \\ \hline
    $\rho_{ch} = 0.45$ & 30 & 3.13 & 155.9 \\
    \hline
  \end{tabular}
\end{table}

Starting from the pre-trained model, a TGL is plugged into the output of every convolution and deconvolution layers except the last one for preserving its generation performance. % model refer?
We trained the model with TGLs for 1.2M iterations with a batch size of 8.
The Adam optimizer \cite{kingma2014adam} was used with initial learning rate 0.0001 and it was halved every 200K iterations after the first 400K iterations.
As in the style transfer task, the number of pruned channels is used as a regularization factor with regularization weight $\lambda = 0.1$.
We used the Flying Chairs dataset \cite{dosovitskiy2015flownet} for training and testing.
The performance of model is measured in average end-point-error (EPE) of validation data.

Table \ref{tbl:flownetS} shows the compression results.
As we can see in the table, the compressed model ($\rho_{ch} = 0.45$) is $\times$4.93 times smaller than the original model in terms of file size and more than $\times$1.88 times faster in terms of inference time.
Note that the EPE of the compressed model (3.13) is almost same with that of the original model (3.15).
But it is only small a bit worse than EPE reported (2.71) in the paper \cite{dosovitskiy2015flownet}.

Our experimental results demonstrate that the compressed model pruned by TGL automatically finds which channels to be retained for reducing model file size, inference time, and FLOPs, while minimizing performance degradation.

\subsection{Neural Machine Translation}

\begin{table}[t]
  \caption{
      Results of pruning attention heads of a transformer model.
      The BLEU score measure on English-to-German newstest2014.
      \textit{En} denotes the total number of retained attention heads in the encoder self-attention heads.
      \textit{De} and \textit{En-De} respectively are defined for the decoder self-attention and the encoder-decoder attention.
      $\rho_{attn}$ denotes the the fraction of the total number of retained attention heads.
  }
  \label{tbl:nmt-pruning}
  \centering
  \begin{tabular}{c|c|c}
    \hline
       Model & En/De/En-De & $\Delta_{\textrm{BLEU}}$ (BLEU) \\
    \hline \hline
       Original & 48/48/48 & \quad  -- \quad  (27.32) \\
    \hline
       $\rho_{attn}=0.83$  & 30/41/48 & -0.06 \ (27.26) \\
    \hline
       $\rho_{attn}=0.39$  & 11/20/25 & -0.18 \ (27.14) \\
    \hline
  \end{tabular}
\end{table}

While we have considered various applications, all of those applications are in the image domain.
So as the last application we applied our pruning method to a neural machine translation task in the language domain.
We tried to compress the transformer model \cite{vaswani2017attention} that has been most widely used.

The transformer model consists of an encoder and a decoder.
Each layer of the encoder has multiple self-attention heads, whereas each layer of the decoder has multiple self-attention heads and multiple encoder-decoder attention heads.
To make each layer compact, we append TGFs each of which masks the corresponding attention head.
Note that unlike in the previous tasks, our pruning method can prune at a block-level, an attention head, not just at the level of a single weight or channel.

We performed WMT 2014 English-to-German translation task as our benchmark and implemented on \textit{fairseq} \cite{ott2019fairseq}.
We trained the model over 472,000 iterations from scratch.
As we can see in Table \ref{tbl:nmt-pruning},
the BLEU score of a pruned model does not degrade much.
In particular, although only 38\% of attention heads are retained, the BLEU score degrades only 0.18.
Our pruning method improved the computational efficiency in the language domain as well,
from which we can conclude that the proposed pruning method is task-agnostic.

\section{Conclusion}

In this paper, we introduced the concept of a TGF
and a differentiable pruning method as an application of the proposed TGF.
The introduction of a TGF allowed us to directly optimize the loss function based on the number of parameters or FLOPs which are non-differentiable discrete values.
Our proposed pruning method can be easily implemented by appending TGLs to the target layers,
and the TGLs do not need additional internal parameters that need careful tuning.
We showed the effectiveness of the proposed method by applying to important applications including image classification and image generation.
Despite its simplicity, our experiments show that the proposed method achieves better compression results on various deep learning models.
We have also shown that the proposed method is task-agnostic by performing various applications including image classification, image generation, and neural machine translation.
We expect that the TGF can be applied to many more applications where we need to train discrete choices and turn them into differentiable training problems.

\section*{Acknowledgements}

We would like to thank Sunghyun Choi and Haebin Shin for supports on our machine translation experiments.
Kibeom Lee and Jungmin Kwon supports the mobile phone experiments of the style transfer.

\bibliographystyle{aaai}
\bibliography{main}

\end{document}